\documentclass[english]{IEEEtran}
\IEEEoverridecommandlockouts    

\usepackage{microtype}
\usepackage{graphicx}
\usepackage{subcaption}
\usepackage{booktabs} 
\usepackage{enumitem} 
\usepackage{hyperref}

\usepackage{amsmath}
\usepackage{amssymb}
\usepackage{mathtools}
\usepackage{amsthm}
\usepackage{algorithm}
\usepackage{algorithmic}

\usepackage[capitalize,noabbrev]{cleveref}

\usepackage[textsize=tiny]{todonotes}

\theoremstyle{plain}

\newtheorem{proposition}{Proposition}
\newtheorem{lemma}{Lemma}

\theoremstyle{definition}
\newtheorem{definition}{Definition}
\newtheorem{assumption}{Assumption}
\theoremstyle{remark}
\newtheorem{remark}{Remark}

\title{\LARGE \bf
Conformalized Signal Temporal Logic Inference under Covariate Shift
}

\author{
Yixuan Wang$^{1}$, Danyang Li$^{2}$, Matthew Cleaveland$^{3}$, Roberto Tron$^{2}$, Mingyu Cai$^{1}$
\vspace{-0.6em}
\thanks{ $^{1}$Mingyu Cai and Yixuan Wang are with Mechanical Engineering, University of California, Riverside, CA, 92521, USA.  
        {\tt\small \{mingyu.cai, ywang1457\}@ucr.edu}
$^{2}$Danyang Li and Roberto Tron are with Mechanical Engineering, Boston University.
        {\tt\small \{danyangl, tron\}@bu.edu}   
$^{3}$Matthew Cleaveland is with MIT Lincoln
Laboratory, Lexington, MA, 02421, USA
        {\tt\small \{ matthew.cleaveland@ll\}@mit.edu}     
        }
\thanks{DISTRIBUTION STATEMENT A. Approved for public release. Distribution is unlimited.
This material is based upon work supported by the Department of the Army under Air Force Contract No. FA8702-15-D-0001 or FA8702-25-D-B002. Any opinions, findings, conclusions or recommendations expressed in this material are those of the author(s) and do not necessarily reflect the views of the Department of the Army.
© 2026 Massachusetts Institute of Technology.
Delivered to the U.S. Government with Unlimited Rights, as defined in DFARS Part 252.227-7013 or 7014 (Feb 2014). Notwithstanding any copyright notice, U.S. Government rights in this work are defined by DFARS 252.227-7013 or DFARS 252.227-7014 as detailed above. Use of this work other than as specifically authorized by the U.S. Government may violate any copyrights that exist in this work.
}%

}

\begin{document}

\maketitle
\thispagestyle{empty}
\pagestyle{empty}




\begin{abstract}
Signal Temporal Logic (STL) inference learns interpretable logical rules for temporal behaviors in dynamical systems. To ensure the correctness of learned STL formulas, recent approaches have incorporated conformal prediction as a statistical tool for uncertainty quantification. However, most existing methods rely on the assumption that calibration and testing data are identically distributed and exchangeable, an assumption that is frequently violated in real-world settings.
This paper proposes a conformalized STL inference framework that explicitly addresses covariate shift between training and deployment trajectories dataset.
From a technical standpoint, the approach first employs a template-free, differentiable STL inference method to learn an initial model, and subsequently refines it using a limited deployment side dataset to promote distribution alignment. To provide validity guarantees under distribution shift, the framework estimates the likelihood ratio between training and deployment distributions and integrates it into an STL-robustness-based weighted conformal prediction scheme.
Experimental results on trajectory datasets demonstrate that the proposed framework preserves the interpretability of STL formulas while significantly improving symbolic learning reliability at deployment time.
The project page can be found: \url{https://sites.google.com/ucr.edu/confrtlics?usp=sharing}.
\end{abstract}

\begin{IEEEkeywords}
Formal Methods, Conformal Prediction, Signal Temporal Logic, Temporal Logic Inference, 
\end{IEEEkeywords}

\section{Introduction}

Interpretable decision models are particularly valuable in control and robotics, where learned decision rules are often used in safety-critical settings and must therefore be inspected and verified. Signal Temporal Logic (STL) provides an interpretable formal language for describing temporal behaviors of dynamical systems. It specifies temporal properties through human-readable logical formulas and equips them with a quantitative robustness semantics to measure the degree of satisfaction or violation. These properties make STL a useful foundation for learning interpretable classifiers over trajectories.
A substantial body of prior work has studied STL inference from data. Early methods typically relied on fixed logical templates or small template families and optimized real-valued parameters using robustness-based objectives~\cite{asarin2011parametric,hoxha2018mining}. More recent approaches have expanded the space of learnable specifications through logic-based learning from demonstrations, and differentiable neural-symbolic frameworks that embed temporal and Boolean operators into trainable computation graphs~\cite{bartocci2022survey,bombara2021offline,yoo2017rich,leung2023backpropagation,li2024tlinetdifferentiableneuralnetwork, li2023learning, yan2019swarm}. These developments improve scalability and flexibility while preserving the interpretability of the learned specifications. In particular, recent differentiable STL learning approaches optimize both formula structure and parameters directly from trajectory data, providing a practical route to compact and expressive symbolic classifiers.

However, interpretability alone is not sufficient for deployment. In many control and robotics applications, a learned classifier is not only expected to be accurate, but also to quantify the reliability of the decisions. A symbolic STL formula may be human-readable, yet its predictions can still be poorly calibrated or statistically unreliable when learned from finite data. This motivates the need for statistical correctness guarantees in addition to interpretable formula learning. Conformal prediction (CP) provides a principled framework for this purpose by converting a real-valued score into a calibrated decision rule with finite-sample coverage guarantees under exchangeability~\cite{vovk2005algorithmic,angelopoulos2021gentle}. In practice, split CP uses a held-out calibration set and empirical quantiles nonconformity scores, which are then used as thresholds for constructing calibrated decision rules~\cite{papadopoulos2008inductive,lei2018distribution,romano2019conformalized}. CP has also been used in robotics and control to provide calibrated decision rules and uncertainty-aware safety wrappers for learning-based systems~\cite{dixit2023adaptive, liang2025time, lindemann2023safe}. In previous works~\cite{soroka2024learning, Li2025ConformalPF}, CP was integrated with differentiable STL inference e.g., TLINet~\cite{li2024tlinetdifferentiableneuralnetwork} and STLcg~\cite{leung2023backpropagation}, to quantify prediction uncertainty while preserving interpretability.

A further challenge is that standard CP relies on exchangeability, meaning that the calibration data and the future test data are assumed to follow the same distributional mechanism. STL formula are typically learned from trajectories collected under nominal or controlled conditions, whereas deployment trajectories may differ because of environmental variation, operational changes, or data-collection bias. Such covariate shift can substantially degrade deployment-time reliability even when the conditional labeling rule remains unchanged. Existing STL inference methods primarily optimize empirical performance on source-side data and do not explicitly address calibration under distribution mismatch between source-side and deployment-side distributions. Prior conformalized STL inference~\cite{ soroka2024learning,Li2025ConformalPF}, does not explicitly address mismatch among training, calibration, and deployment data. Weighted CP provides one statistical mechanism for handling such mismatch by reweighting calibration samples according to their deployment relevance~\cite{tibshirani2019conformal, dixit2023adaptive}. As a result, we posit that in the STL setting, reliability under distribution shift should be addressed jointly at two levels: the learned formula should adapt toward deployment-relevant regions, and the final decision threshold should be calibrated in a distribution-aware manner. Our framework is more broadly compatible with differentiable STL inference methods. In the sequel, we formulate the method using binary trajectory classification as a concrete instantiation.

Motivated by this challenge, we study conformalized STL inference under covariate shift. Our approach starts from a differentiable STL inference model and refines the learned formula using an additional dataset that is more representative of deployment conditions. The refinement stage adapts the formula toward the test-time distribution through data reweighting~\cite{shimodaira2000improving,loftsgaarden1965nonparametric} and robustness-distribution regularization~\cite{hamza2003jensen}. We then perform weighted CP for the resulting robustness-based decision rule, so that calibration better reflects the test-time distribution. In this way, the framework jointly addresses formula adaptation and statistical calibration under distribution shift. Contributions are as follows:

\begin{itemize}[leftmargin=*, noitemsep, topsep=0pt]
    \item We develop a covariate-shift-aware incremental learning scheme for differentiable STL inference that adapts a pretrained STL formula through STL robustness based distributional alignment, while maintaining both classification performance and interpretability.
    \item To provide statistical correctness under distribution mismatch, we extend conformalized STL inference to the covariate shift setting using weighted CP~\cite{tibshirani2019conformal} with a robustness-based nonconformity score tailored to learned STL formulas.
    \item To improve the stability of shift alignment during learning, we introduce a distribution-level termination criterion that ineffective learning iterations.
    \item Experiments on trajectory datasets demonstrate that the proposed method achieves well-calibrated inference coverage under covariate shift and reduces empirical miscoverage.
\end{itemize}

\section{Preliminaries}
\label{section2}
We work with trajectories of the form $x=(x_0,\dots,x_{T})$, where $x_t\in\mathbb{R}^d$ denotes the system state at time $t$.


\subsection{Signal Temporal Logic Inference}

Signal Temporal Logic (STL) is a formal language for describing temporal and spatial properties of time-series data.
Atomic predicates are taken to be linear inequalities of the form $\mu(x_t) \equiv a^\top x_t \ge b, a \in \mathbb{R}^d,\;b \in\mathbb{R}$, which test whether a linear constraint on the state is satisfied at a given time. STL formula are generated from atomic predicates according to the grammar
\[
  \varphi ::= \mu
  \;\big|\;
  \neg \varphi
  \;\big|\;
  \varphi_1 \land \varphi_2
  \;\big|\;
  \varphi_1 \lor \varphi_2
  \;\big|\;
  \Diamond_{[t_1,t_2]} \varphi
  \;\big|\;
  \Box_{[t_1,t_2]} \varphi
\]
where $\neg$, $\land$ and $\lor$ denote Boolean negation, conjunction and disjunction, $\Diamond_{t_1,t_2}$ and $\Box_{[t_1,t_2]}$ are the temporal "eventually" and “always” operators over the interval $[t_1,t_2] \subseteq \{0,\dots,T\}$.

The standard quantitative semantics of STL associates to each formula
$\varphi$, trajectory $X$, and time $t$ a real-valued robustness score
$\rho_\varphi(X,t) \in \mathbb{R}$ that measures the degree of satisfaction. We denote $\rho_\varphi(X) := \rho_\varphi(X, 0)$. Positive values indicate satisfaction, negative values indicate violation, and larger absolute values correspond to larger robustness margins. For atomic predicates, Boolean operators, and temporal operators, the robust semantics are defined recursively as follows:
\begin{align}
\rho_{\mu}(X,t) &= a^\top x_t - b \label{eq:atomic} \\
\rho_{\neg \varphi}(X,t) &= - \rho_{\varphi}(X,t) \label{eq:neg} \\
\rho_{\varphi_1 \wedge \varphi_2}(X,t) &= 
\min\!\big(\rho_{\varphi_1}(X,t), \rho_{\varphi_2}(X,t)\big)\label{eq:and} \\
\rho_{\varphi_1 \vee \varphi_2}(X,t) &= 
\max\!\big(\rho_{\varphi_1}(X,t), \rho_{\varphi_2}(X,t)\big) \label{eq:or} \\
\rho_{\Diamond_{[t_1,t_2]}\varphi}(X,t) &=
\max_{\tau \in [t_1,t_2]} \rho_{\varphi}(X,t+\tau) \label{eq:eventually} \\
\rho_{\Box_{[t_1,t_2]}\varphi}(X,t) &=
\min_{\tau \in [t_1,t_2]} \rho_{\varphi}(X,t+\tau) \label{eq:always}
\end{align}


STL formulas can be used as binary classifiers for trajectories through the sign of the robustness score. Given a labeled trajectory $(X,Y)$ with $Y\in\{-1,+1\}$, define
\begin{equation}
\hat{Y}(X)=
\begin{cases}
+1, & \rho_\varphi(X) > 0,\\
-1, & \text{otherwise}
\end{cases}
\end{equation}
Thus, the product $Y\rho_\varphi(X)$ is positive when the robustness sign agrees with the true label and negative otherwise.

\paragraph{Misclassification rate (MCR)}
For an evaluation dataset $D=\{(X_i,Y_i)\}_{i=1}^{n}$, the misclassification rate (MCR) induced by $\varphi$ is defined as
\[
\mathrm{MCR}(D;\varphi)
=
\frac{1}{n}\sum_{i=1}^{n}\mathbf{1}\!\left[\hat{Y}(X_i)\neq Y_i\right]
\]
Minimizing the MCR corresponds to learning an STL formula whose robustness sign aligns with the ground-truth labels.

\paragraph{Differentiable STL learning (TLINet)}

TLINet~\cite{li2024tlinetdifferentiableneuralnetwork} is adopted as a differentiable STL learner for optimizing STL formulas from data. Since the 0-1 misclassification loss $\mathbf{1}[Y\rho_{\varphi_\theta}(X)\le 0]$ is non-differentiable, optimization is performed using a smooth margin-based surrogate, namely the logistic loss, as described in Sec~\ref{sec:tlinet}.

\subsection{Conformal Prediction}
\label{sec:nonconformity score}
Conformal Prediction (CP)~\cite{angelopoulos2021gentle} is a distribution-free procedure that augments the output of a fixed predictive model with prediction regions that enjoy finite-sample coverage guarantees. In split CP, calibration is performed on a held-out dataset using a nonconformity score, which measures how inconsistent a labeled sample is with the model's decision rule. The dataset is partitioned into a training set $D_{\mathrm{train}}$, a calibration set $D_{\mathrm{cal}}=\{(X_i,Y_i)\}_{i=1}^{n_{\mathrm{cal}}}$, and a test set $D_{\mathrm{test}}$, where $D_{\mathrm{cal}}$ and a new test sample are exchangeable. In our shift-aware setting, the calibration data are not assumed exchangeable with the test data. Instead, weighted CP is used to approximate deployment-relevant nonconformity score quantiles under covariate shift.

Let $A:\mathcal{X}\times\mathcal{Y}\to\mathbb{R}$ denote a measurable nonconformity score function,
where $\mathcal{X}$ is the trajectory space and $\mathcal{Y}$ is the label space.
The nonconformity scores on the calibration set are computed as
\begin{equation}
s_i = A\bigl(X_i,Y_i), \quad i = 1, \ldots, n_{\mathrm{cal}}.
\end{equation}

For a target miscoverage level $\alpha \in (0,1)$, the conformal threshold is defined
as the $(1-\alpha)$ quantile of the multiset $\{s_1, \ldots, s_{n_{\mathrm{cal}}}\}$.
Specifically, let $k = \lceil (n_{\mathrm{cal}} + 1)(1 - \alpha) \rceil$ and define $T_{\mathrm{CP}} := s_{(k)}$, where $s_{(k)}$ denotes the
$k$-th smallest value. The resulting CP set for a test input $X_{test}\in\mathcal{X}$ is
\[\mathcal{C}(X_{test}):=\{Y_{test}\in\mathcal{Y}:A(X_{test},Y_{test})\le T_{\mathrm{CP}}\}\]
Under the exchangeability assumption, the CP guarantee yields
\begin{equation}
\mathbb{P}\!\left(Y_{\mathrm{test}}\in \mathcal{C}(X_{\mathrm{test}})\right)\ge 1-\alpha
\end{equation}


\section{Problem Formulation}
\label{sec:3}
We consider the problem of binary classification of system trajectories using STL. Let $\varphi_\theta$ denote an STL formula learned by a differentiable STL inference method, where $\theta$ collects the learnable formula parameters, such as predicate and temporal parameters. The induced robustness score $\rho_{\varphi_\theta}(X)$ is used to define the binary classification.

\paragraph{Data distributions and covariate shift}
Following the distribution-shift setting described in the introduction, let $P_{\mathrm{train}}$ denote the nominal training distribution and $P_{\mathrm{dep}}$ denote the deployment distribution. We consider a covariate shift setting in which the conditional labeling rule remains invariant while the marginal distribution over trajectories changes.
\begin{assumption}[Covariate Shift under STL Semantics]
\label{sec:assump1}
We assume
\[
P_{\mathrm{train}}(Y \mid X) = P_{\mathrm{dep}}(Y \mid X),
\qquad
P_{\mathrm{train}}(X) \neq P_{\mathrm{dep}}(X)
\]
This assumption is natural in our setting because labels are defined by STL satisfaction, which depends only on the trajectory and the specification, and is therefore invariant to operating conditions.
\end{assumption}

\paragraph{Data partitioning}
We consider three mutually disjoint datasets. A core training set $D_{\mathrm{train}} \sim P_{\mathrm{train}}$ is used to learn an initial STL formula and define task semantics. Because it is collected under nominal or conservative conditions, it may not adequately cover the range of trajectories encountered at deployment. We assume access to a limited deployment-side dataset $D_{\mathrm{dep}}$ that captures behaviors underrepresented in $D_{\mathrm{train}}$. Although $D_{\mathrm{dep}}$ is only a partial and potentially biased sample from $P_{\mathrm{dep}}$, it provides information about regions likely to arise at test time and is incorporated only through distribution-aware objectives, without altering the labeling rule. Finally, a separate calibration set $D_{\mathrm{cal}} \sim P_{\mathrm{train}}$ is reserved exclusively for weighted conformal calibration of the robustness-based decision rule~\cite{angelopoulos2021gentle,lindemann2023safe}. Since $D_{\mathrm{cal}}$ is drawn from $P_{\mathrm{train}}$ whereas deployment-time test trajectories follow $P_{\mathrm{dep}}$, standard calibration is no longer distribution-matched. We therefore use importance weighting to make calibration better reflect the target deployment distribution.

\paragraph{Problem Formulation}


Given $D_{\mathrm{train}}$, $D_{\mathrm{dep}}$, and $D_{\mathrm{cal}}$ as defined above, the objective is to learn $\varphi_\theta$. For a $\theta$, the corresponding  prediction set is induced by the calibration rule and is denoted by $C_\theta(\cdot)$. The resulting STL decision rule should achieve both low misclassification and reliable coverage under the deployment distribution:
\[
\begin{aligned}
\min_{\theta}\quad
& \mathrm{MCR}_{P_{\mathrm{dep}}}(\varphi_\theta)
:= \mathbb{E}_{(X,Y)\sim P_{\mathrm{dep}}}\!\left[\mathbf{1}\!\left[Y\rho_{\varphi_\theta}(X)\le 0\right]\right] \\
\text{s.t.}\quad
& \mathbb{P}_{(X,Y)\sim P_{\mathrm{dep}}}\!\left(Y\in C_{\theta}(X)\right)\ge 1-\alpha
\end{aligned}
\]

\begin{remark}
\textit{Challenges and Motivations:}
If the covariate shift is ignored, an STL formula learned only from $D_{\mathrm{train}}$ may overfit regions of the trajectory space that are well represented during training but less relevant at deployment, leading to degraded classification performance and miscalibration under $P_{\mathrm{dep}}$. This can be effective when $D_{\mathrm{dep}}$ is sufficiently large and representative, but in practice the available deployment-side data is often limited and distributionally mismatched relative to $D_{\mathrm{train}}$, making naive retraining statistically unstable.
\end{remark}

Our approach instead uses the STL formula learned from $D_{\mathrm{train}}$ as a warm start and incorporates $D_{\mathrm{dep}}$ through distribution-aware objectives that steer the learned robustness scores toward regions more relevant to deployment. To assess how informative the deployment-side data is, we monitor an effective sample size diagnostic derived from density-ratio weights. This quantity reflects the concentration of the weights and indicates whether further refinement additional refinement is likely to materially improve deployment alignment.

\begin{figure}[t]
    \centering
    \includegraphics[width=\columnwidth]{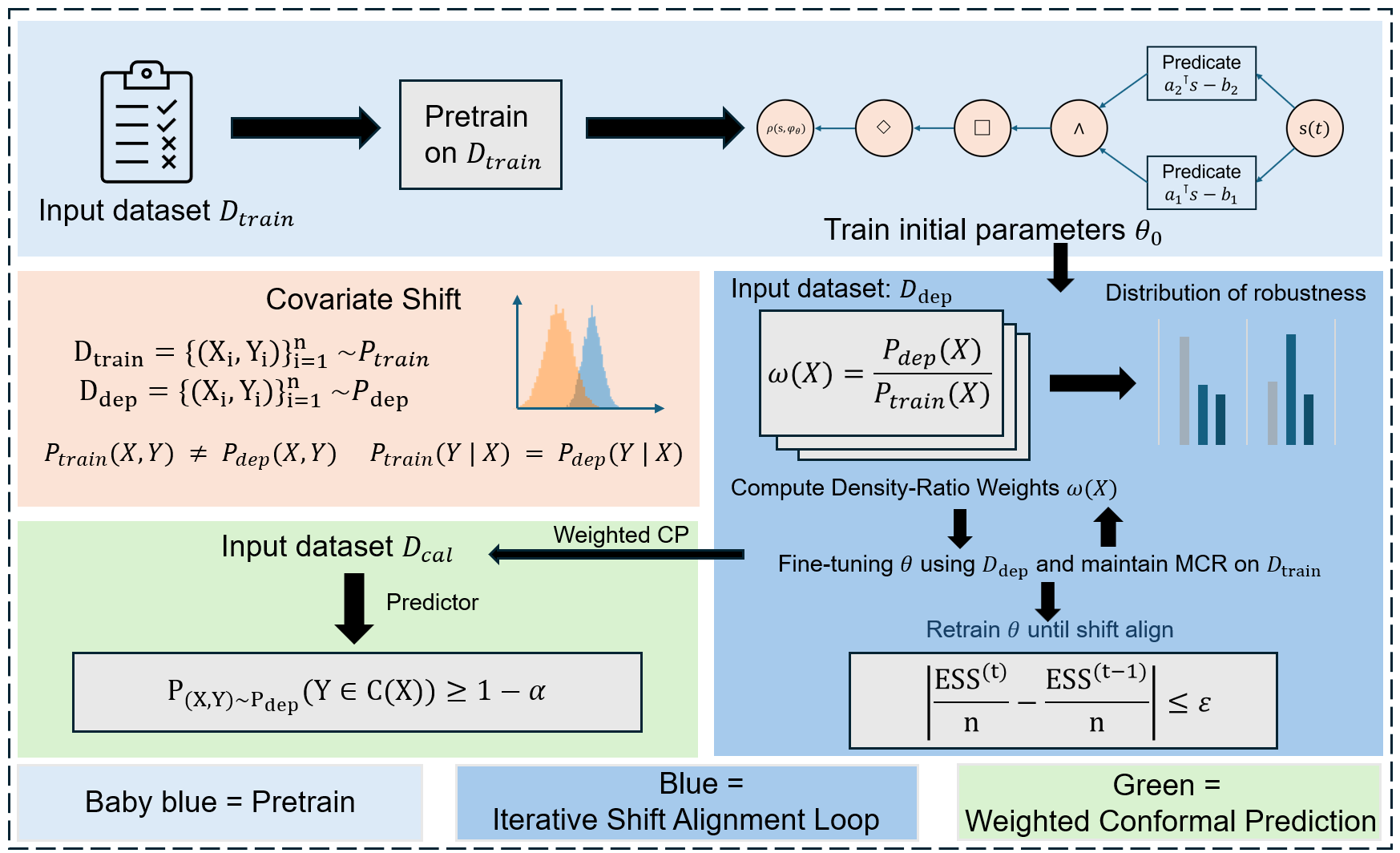}
    \caption{Pipeline of proposed shift-aware conformal STL framework under covariate shift. The framework consists of four sequential stages integrating STL formula learning, distribution alignment, and weighted conformal calibration.}
    \label{fig:framework diagram}
    \vspace{-1.5em}
\end{figure}

\section{Conformalized Active STL Inference under Covariate Shift \label{sec:TLICP}}

In Section~\ref{sec:tlinet}, we present a shift-aware STL framework that learns a formula from training data and adapts it using deployment-proxy data. Section~\ref{sec:CP} establishes correctness guarantees under distribution shift. The overall pipline is summerized in Fig.~\ref{fig:framework diagram}

\vspace{-0.5em}
\subsection{STL Inference Adaptation}
\label{sec:tlinet}

The framework builds on TLINet~\cite{li2024tlinetdifferentiableneuralnetwork} as a differentiable, template-free backbone for learning STL formulas from labeled trajectories, enabling gradient-based optimization and yielding interpretable, parameterized STL specifications.
Training is performed by minimizing a margin-based surrogate loss applied to the signed robustness value $Y\rho_{\varphi_\theta}(X)$. Following TLINet, the logistic loss is used:
\[
\ell(Y\rho_{\varphi_\theta}(X))
=
\log(1+\exp(-Y\rho_{\varphi_\theta}(X)))
\]

\noindent Given a core training dataset $D_{\mathrm{train}}=\{(X_i,Y_i)\}_{i=1}^{n_c}$, the nominal STL formula is obtained by minimizing the empirical surrogate loss
\[
\mathcal{L}_{\mathrm{train}}(\theta)
=
\frac{1}{n_c}
\sum_{(X,Y)\in D_{\mathrm{train}}}
\ell(Y\rho_{\varphi_\theta}(X))
\]

Let $\theta_0$ denote a minimizer of $\mathcal{L}_{\mathrm{train}}$, and $\varphi_{\theta_0}$ the corresponding STL formula learned from $D_{\mathrm{train}}$. The parameter $\theta_0$ defines the nominal STL formula $\varphi_{\theta_0}$ learned from the training dataset $D_{\mathrm{train}}$. This formula serves as a warm-start
initialization for the subsequent shift-aware adaptation stage, and provides the semantic structure of the STL specification.
The adaptation stage further refines them using deployment-side data $D_{\mathrm{dep}}$ toward deployment-relevant trajectory regions through the distribution-aware objectives introduced below. 



\paragraph{Density-Ratio Weighting on the Alignment Set}

As defined in Section~\ref{sec:3}, the learning setting assumes a covariate shift between training and deployment trajectories. For a trajectory $X \in \mathcal{X}$, the ideal importance weight is the density ratio that exactly reweights expectations from the training distribution to the deployment distribution:

\begin{equation}
\omega^\star(X) := \frac{p_{\mathrm{dep}}(X)}{p_{\mathrm{train}}(X)}
\end{equation}
where $p_{\mathrm{train}}$ and $p_{\mathrm{dep}}$ denote the underlying marginal densities of trajectories under training and deployment distributions, respectively. 

Direct estimation of $\omega^\star(X)$ is generally difficult without strong parametric assumptions. We therefore approximate it using a $k$-nearest-neighbor (kNN) density-ratio estimator in an embedding space~\cite{zhao2022analysis}, which provides a simple nonparametric approximation based on local sample density. Other density-ratio estimators, such as kernel mean matching (KMM), KLIEP, and uLSIF, could also be used~\cite{sugiyama2012density}. The intuition is that, in a $p$-dimensional embedding space, the local sample density around $X$ is approximately inversely proportional to the volume of its kNN neighborhood, and hence to $r_k(X;S)^p$. Therefore, the ratio of kNN radii provides a local approximation to the density ratio between the deployment and training distributions. To instantiate this estimator, let $f:\mathcal{X}\rightarrow\mathbb{R}^p$ denote a fixed embedding. For a reference set $\mathcal{S}$, we define the kNN radius of $X$ as
\[
r_k(X;\mathcal{S}) = \| f(X) - f(X^{(k)}) \|_2
\]
where $X^{(k)} \in \mathcal{S}$ is the $k$-th nearest neighbor of $X$. The unnormalized kNN density-ratio weight is then given by
\[
\hat{\omega}(X) =
\left( \frac{r_k(X; D_{\mathrm{train}})}{r_k(X; D_{\mathrm{dep}})} \right)^p
\]

To improve numerical stability, we normalize the raw weights to have unit empirical mean over $D_{\mathrm{dep}}$ and clip them to a fixed range:
\begin{equation}
\tilde{\omega}(X)
=
\frac{\hat{\omega}(X)}
{\frac{1}{|D_{\mathrm{dep}}|}
\sum_{X' \in D_{\mathrm{dep}}}\hat{\omega}(X')}
\label{eq:weight_norm}
\end{equation}

\begin{equation}
\omega(X)
=
\min\!\left\{
\max\!\left\{\tilde{\omega}(X),\,\omega_{\min}\right\}
\,\omega_{\max}
\right\}
\label{eq:weight_clip}
\end{equation}

Hence, $\omega(X)$ is used as a practical approximation to the ideal density-ratio weight $\omega^\star(X)$. Although not exact, it captures relative distributional differences between $P_{train}$ and $P_{dep}$ and emphasizes trajectories more relevant to deployment.

\begin{lemma}
\label{lem:iw}
Under Assumption~\ref{sec:assump1},  assume that $P_{\mathrm{dep}}$ is absolutely continuous with respect to $P_{\mathrm{train}}$ over $X$. Then, with ideal density-ratio weight $\omega^\star(X)$, for any measurable function $h:\mathcal{X}\times\{-1,+1\}\to\mathbb{R}$ with finite expectation, we have
\[
\mathbb{E}_{(X,Y)\sim P_{\mathrm{dep}}}[h(X,Y)]
=
\mathbb{E}_{(X,Y)\sim P_{\mathrm{train}}}[\omega^\star(X)\,h(X,Y)]
\]

\end{lemma}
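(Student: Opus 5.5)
The plan is a change of measure that combines the covariate-shift factorization from Assumption~\ref{sec:assump1} with the Radon--Nikodym theorem. First I will fix notation. Let $P^X_{\mathrm{train}}$ and $P^X_{\mathrm{dep}}$ be the $X$-marginals. Let $\kappa(\cdot\mid X)$ be a regular conditional distribution of $Y$ given $X$; since $Y$ takes only the two values $\pm 1$, $\kappa$ is just the measurable function $X\mapsto P(Y=+1\mid X)$. By Assumption~\ref{sec:assump1}, the same kernel $\kappa$ serves for both $P_{\mathrm{train}}$ and $P_{\mathrm{dep}}$. Hence each joint law disintegrates as
\[
P_{\mathrm{train}}(dx,dy)=P^X_{\mathrm{train}}(dx)\,\kappa(dy\mid x),\qquad P_{\mathrm{dep}}(dx,dy)=P^X_{\mathrm{dep}}(dx)\,\kappa(dy\mid x).
\]
I will interpret $\omega^\star$ as the Radon--Nikodym derivative $dP^X_{\mathrm{dep}}/dP^X_{\mathrm{train}}$. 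This derivative exists and is nonnegative and measurable by the absolute-continuity hypothesis, and it equals $p_{\mathrm{dep}}/p_{\mathrm{train}}$ $P^X_{\mathrm{train}}$-a.e. whenever both densities exist.

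Next I will reduce to a function of $X$ alone. Define
\[
g(x):=\int h(x,y)\,\kappa(dy\mid x)=h(x,+1)\,\kappa(+1\mid x)+h(x,-1)\,\kappa(-1\mid x).
\]
By the disintegration and Fubini/Tonelli (first for $h\ge 0$, then for $h=h^+-h^-$ using the finite-expectation hypothesis),
\[
\mathbb{E}_{P_{\mathrm{dep}}}[h]=\int g\,dP^X_{\mathrm{dep}} \quad\text{and}\quad \mathbb{E}_{P_{\mathrm{train}}}[\omega^\star h]=\int \omega^\star g\,dP^X_{\mathrm{train}}.
\]
The second identity uses the fact that $\omega^\star$ depends only on $x$ and can therefore be pulled outside the inner $y$-integral.

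The final step is the defining property of the Radon--Nikodym derivative, $\int g\,dP^X_{\mathrm{dep}}=\int g\,\omega^\star\,dP^X_{\mathrm{train}}$. I will verify it on indicators, extend by linearity to simple functions, pass to nonnegative measurable $g$ by monotone convergence, and finally treat integrable $g$ through its positive and negative parts.

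The only delicate point is integrability bookkeeping. The hypothesis that $h$ has finite expectation is stated once, but both sides must be finite. I will first run the whole argument for $|h|$, where every step is with nonnegative functions, so Tonelli applies without any integrability assumption. This shows
\[
\mathbb{E}_{P_{\mathrm{dep}}}|h|=\mathbb{E}_{P_{\mathrm{train}}}[\omega^\star|h|],
\]
so finiteness under either measure transfers to the other. That justifies the split into $h^+$ and $h^-$, with no $\infty-\infty$ issue arising. A minor related point is the null set where $p_{\mathrm{train}}=0$: absolute continuity makes it $P_{\mathrm{dep}}$-null as well, so the value assigned to $\omega^\star$ there is irrelevant.
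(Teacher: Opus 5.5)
Your proposal is correct and uses the same change-of-measure argument as the paper. The paper factors each joint law into the $X$-marginal times the shared conditional law of $Y$ given $X$, which the covariate-shift assumption supplies, and then replaces $p_{\mathrm{dep}}(x)$ by $\omega^\star(x)\,p_{\mathrm{train}}(x)$. Your Radon--Nikodym formulation, the Tonelli pass on $|h|$, and your handling of the set where $p_{\mathrm{train}}=0$ simply make rigorous the density-existence, null-set, and integrability details that the paper's computation takes for granted.
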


\begin{proof}
By the law of total expectation under $P_{\mathrm{dep}}$,
\[
\begin{aligned}
\mathbb{E}_{P_{\mathrm{dep}}}[h(X,Y)]
&=
\int_{\mathcal{X}} \sum_{y\in\{-1,+1\}}
h(x,y)\,p_{\mathrm{dep}}(y\mid x)\,p_{\mathrm{dep}}(x)\,dx
\end{aligned}
\]
Using $p_{\mathrm{dep}}(x)=\omega^\star(x)\,p_{\mathrm{train}}(x)$, we obtain
\[
\begin{aligned}
\mathbb{E}_{P_{\mathrm{dep}}}[h(X,Y)]
&=\int_{\mathcal{X}} \sum_{y\in\{-1,+1\}}
h(x,y)\,\omega^\star(x)\,\\
&\qquad\qquad
p_{\mathrm{train}}(y\mid x)\,
p_{\mathrm{train}}(x)\,dx \\
&=
\mathbb{E}_{P_{\mathrm{train}}}[\omega^\star(X)\,h(X,Y)]
\end{aligned}
\]
\end{proof}


\begin{proposition}
\label{pro:pro1}
Under covariate shift, consider the expected misclassification rate under the deployment distribution $P_{\mathrm{dep}}$. For any fixed STL formula $\varphi_\theta$, the misclassification rate under the deployment distribution can be written as
\begin{multline}
\mathbb{E}_{(X,Y)\sim P_{\mathrm{dep}}}
\bigl[\mathbf{1}\{Y\rho_{\varphi_\theta}(X)\le 0\}\bigr]\nonumber\\
= \mathbb{E}_{(X,Y)\sim P_{\mathrm{train}}}
\bigl[\omega^\star(X)\,
\mathbf{1}\{Y\rho_{\varphi_\theta}(X)\le 0\}\bigr]
\end{multline}
In practice, $\omega^\star(X)$ is replaced by its kNN-based approximation $\omega(X)$.
\end{proposition}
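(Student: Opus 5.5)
The plan is to obtain Proposition~\ref{pro:pro1} as a direct specialization of Lemma~\ref{lem:iw}. Fix the parameter $\theta$, so that $\varphi_\theta$ is a fixed STL formula, and define
\[
h_\theta(X,Y) := \mathbf{1}\{Y\rho_{\varphi_\theta}(X)\le 0\}.
\]
If $h_\theta$ satisfies the hypotheses of Lemma~\ref{lem:iw}, then the lemma gives exactly the claimed identity. So the work consists of verifying those hypotheses. We work under Assumption~\ref{sec:assump1} together with the absolute continuity of $P_{\mathrm{dep}}$ with respect to $P_{\mathrm{train}}$ over $X$, as in the lemma. The latter is implicit in the phrase ``under covariate shift'' and is needed for $\omega^\star$ to be well defined.

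\textbf{Measurability of $h_\theta$.} By the recursive semantics \eqref{eq:atomic}--\eqref{eq:always}, $\rho_{\varphi_\theta}(X)$ is built in finitely many steps from affine functions $a^\top x_t - b$ of finitely many coordinates of the trajectory $X$. The only operations used are negation, pairwise $\min$ and $\max$, and $\min$ or $\max$ over finite index sets $\{t_1,\dots,t_2\}$. Each step preserves continuity, so by induction on the formula structure $X\mapsto\rho_{\varphi_\theta}(X)$ is continuous, hence Borel measurable. Multiplying by $Y\in\{-1,+1\}$ keeps $(X,Y)\mapsto Y\rho_{\varphi_\theta}(X)$ measurable on $\mathcal{X}\times\{-1,+1\}$. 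The set $\{Y\rho_{\varphi_\theta}(X)\le 0\}$ is the preimage of the closed set $(-\infty,0]$, so $h_\theta$ is a measurable indicator function.

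\textbf{Finite expectation.} Since $0\le h_\theta\le 1$, its expectation is finite under $P_{\mathrm{dep}}$. On the training side, $\omega^\star h_\theta$ is nonnegative and bounded by $\omega^\star$. We also have $\mathbb{E}_{P_{\mathrm{train}}}[\omega^\star(X)] = \int p_{\mathrm{dep}}(x)\,dx = 1$, so the right-hand side is finite as well. Lemma~\ref{lem:iw} then applies and yields the stated equality. The left-hand side is by definition $\mathrm{MCR}_{P_{\mathrm{dep}}}(\varphi_\theta)$, and the result holds for every fixed $\theta$.

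The only step requiring care is measurability, specifically the inductive argument over the STL grammar, and it is routine since all operators are finite min/max compositions. I would also add a remark on the sentence about replacing $\omega^\star$ by $\omega$. It is not part of the identity being proved. With the kNN weights of \eqref{eq:weight_norm}--\eqref{eq:weight_clip} the equality holds only approximately. The discrepancy is bounded by $\mathbb{E}_{P_{\mathrm{train}}}|\omega(X)-\omega^\star(X)|$, because $|h_\theta|\le 1$.
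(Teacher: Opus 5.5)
Your proposal is correct and follows the paper's proof: the paper also obtains the identity by applying Lemma~\ref{lem:iw} with $h(X,Y)=\mathbf{1}\{Y\rho_{\varphi_\theta}(X)\le 0\}$, and it does not check the lemma's hypotheses. Your checks fill that gap and are sound: continuity of the robustness map by induction over the STL grammar, boundedness of the indicator, and $\mathbb{E}_{P_{\mathrm{train}}}[\omega^\star(X)]=1$ under absolute continuity.
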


\begin{proof}
This proposition is a special case of Lemma~\ref{lem:iw} with $h(X,Y)=\mathbf{1}\{Y\rho_{\varphi_0}(X)\le 0\}$.
\end{proof}

Proposition~\ref{pro:pro1} motivates a deployment-time learning objective that combines the nominal training loss on $D_{train}$ with a weighted loss on
$D_{dep}$.
\[
\begin{aligned}
\mathcal{L}_{train}(\theta)
= {} &
\frac{1}{|D_{\mathrm{train}}|}
\sum_{(X,Y)\in D_{\mathrm{train}}}
\ell\!\left(Y\,\rho_{\varphi_\theta}(X)\right) \\
&+
\frac{1}{|D_{\mathrm{dep}}|}
\sum_{(X,Y)\in D_{\mathrm{dep}}}
\omega(X)\,
\ell\!\left(Y\,\rho_{\varphi_\theta}(X)\right)
\end{aligned}
\]
The first term preserves the nominal STL semantics learned from $D_{\mathrm{train}}$, while the second term emphasizes deployment-relevant trajectories through density-ratio weighting. However, these two terms alone are not sufficient. In practice, the estimated density-ratio weights can be highly non-uniform, so direct optimization of the weighted term may lead to unstable updates and may distort the robustness structure learned from $D_{\mathrm{train}}$~\cite{tibshirani2019conformal}. To stabilize refinement and further adapt the learned STL rule at the distribution level, we add a robustness-distribution regularizer based on the Jensen--R\'enyi divergence (JRD)~\cite{giraldo2013information}.

\paragraph{Regularization of Robustness-Value Distributions via Jensen--R\'enyi Divergence}
The JRD is adopted to compare the empirical robustness distributions induced on $D_{train}$ and $D_{dep}$. As a symmetric divergence between probability measures, it provides a distribution-level adaptation term that encourages the learned robustness values to remain aligned across the source and deployment-side data. Other distributional discrepancies, e.g., Jensen--Shannon divergence~\cite{shui2022novel}, could also be used.

\begin{definition}
For a given parameter $\theta$, define the empirical robustness distributions induced by the training set and the deployment set as
\begin{equation}
\begin{aligned}
\hat P_{\mathrm{train}}^\theta
&:=
\frac{1}{|D_{\mathrm{train}}|}
\sum_{(X,Y)\in D_{\mathrm{train}}}
\delta_{\rho_{\varphi_\theta}(X)}
\\
\hat P_{\mathrm{dep}}^\theta
&:=
\frac{1}{|D_{\mathrm{dep}}|}
\sum_{X\in D_{\mathrm{dep}}}
\delta_{\rho_{\varphi_\theta}(X)}
\end{aligned}
\end{equation}
where $\delta_{\rho_{\varphi_\theta}(X)}$ denotes the unit point mass located at the robustness value $\rho_{\varphi_\theta}(X)$.
\end{definition}

The following proposition shows that this regularizer yields an explicit bound on the resulting robustness-distribution mismatch.

\begin{proposition}
Consider the regularized training objective
\[
\mathcal{L}(\theta)
=
\mathcal{L}_{\mathrm{train}}(\theta)
+
\lambda_{\mathrm{JRD}}\,
\mathrm{JRD}\!\left(
\hat P^{\theta}_{\mathrm{train}},
\hat P^{\theta}_{\mathrm{dep}}
\right)
\]
where $\lambda_{\mathrm{JRD}}>0$ is a regularization parameter controlling the strength of the robustness-distribution alignment penalty. Let $\theta^\star$ be a minimizer of $\mathcal{L}$, and let  $\theta_{\mathrm{train}}$ be a minimizer of the unregularized training loss $\mathcal{L}_{\mathrm{train}}$. Then
\[
\begin{aligned}
\mathrm{JRD}\!\left(\hat P_{\mathrm{train}}^{\theta^\star},\hat P_{\mathrm{dep}}^{\theta^\star}\right)
\le\;&
\frac{
\mathcal{L}_{\mathrm{train}}(\theta_{\mathrm{train}})
-
\mathcal{L}_{\mathrm{train}}(\theta^\star)
}{\lambda_{\mathrm{JRD}}}
\\
&\quad+
\mathrm{JRD}\!\left(
\hat P_{\mathrm{train}}^{\theta_{\mathrm{train}}},
\hat P_{\mathrm{dep}}^{\theta_{\mathrm{train}}}
\right)
\end{aligned}
\]
\end{proposition}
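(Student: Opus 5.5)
The plan is the standard optimality-comparison argument for penalized objectives. The only property of $\theta^\star$ it uses is that its value of $\mathcal{L}$ is no larger than the value at the competitor $\theta_{\mathrm{train}}$. Write
\[
J(\theta) := \mathrm{JRD}\!\left(\hat P^{\theta}_{\mathrm{train}},\hat P^{\theta}_{\mathrm{dep}}\right),
\]
so that $\mathcal{L}(\theta)=\mathcal{L}_{\mathrm{train}}(\theta)+\lambda_{\mathrm{JRD}}J(\theta)$. Note that $\mathcal{L}_{\mathrm{train}}$ here is the combined objective: the nominal loss on $D_{\mathrm{train}}$ plus the $\omega$-weighted loss on $D_{\mathrm{dep}}$. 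Since $\theta_{\mathrm{train}}$ minimizes exactly this function, both terms are the same object, and no interaction with Lemma~\ref{lem:iw} or Proposition~\ref{pro:pro1} is needed.

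The first step is to use the global optimality of $\theta^\star$ for $\mathcal{L}$, evaluated at the feasible point $\theta_{\mathrm{train}}$:
\[
\mathcal{L}_{\mathrm{train}}(\theta^\star)+\lambda_{\mathrm{JRD}}J(\theta^\star)
\le
\mathcal{L}_{\mathrm{train}}(\theta_{\mathrm{train}})+\lambda_{\mathrm{JRD}}J(\theta_{\mathrm{train}}).
\]
The second step is to rearrange and divide by $\lambda_{\mathrm{JRD}}>0$, which preserves the inequality direction:
\[
J(\theta^\star)\le \frac{\mathcal{L}_{\mathrm{train}}(\theta_{\mathrm{train}})-\mathcal{L}_{\mathrm{train}}(\theta^\star)}{\lambda_{\mathrm{JRD}}}+J(\theta_{\mathrm{train}}).
\]
This is precisely the claimed bound. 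As a side remark, optimality of $\theta_{\mathrm{train}}$ for $\mathcal{L}_{\mathrm{train}}$ makes the numerator nonpositive. The statement therefore even implies $J(\theta^\star)\le J(\theta_{\mathrm{train}})$, so regularization never increases the robustness-distribution mismatch relative to the unregularized solution. This is not needed for the stated inequality, but it is the interpretive point of the result.

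There is no real technical obstacle; the main care is well-posedness. I would note explicitly three things:
\begin{enumerate}
\item The minimizers $\theta^\star$ and $\theta_{\mathrm{train}}$ are assumed to exist, which holds for instance if the parameter set is compact and the objectives are continuous.
\item $J$ is finite at both points. This holds since the JRD of two finitely supported empirical measures, defined through the R\'enyi entropy of their mixture minus the average entropies, is finite. Any kernel-smoothed estimator in the style of \cite{giraldo2013information} is also finite.
\item $\lambda_{\mathrm{JRD}}>0$, which is needed for the division.
\end{enumerate}
If $\theta^\star$ were only an approximate minimizer, the same argument would go through with an additive slack $\epsilon/\lambda_{\mathrm{JRD}}$ on the right-hand side.
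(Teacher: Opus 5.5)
Your argument is correct and matches the paper's proof: use $\mathcal{L}(\theta^\star)\le\mathcal{L}(\theta_{\mathrm{train}})$, rearrange, and divide by $\lambda_{\mathrm{JRD}}>0$. Your side remark is also correct and goes beyond what the paper states: the numerator is nonpositive, so the bound in fact gives a JRD at $\theta^\star$ no larger than at $\theta_{\mathrm{train}}$.
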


\begin{proof}
By optimality of $\theta^\star$, we have $\mathcal{L}(\theta^\star)\le \mathcal{L}(\theta_{\mathrm{train}})$. Rearranging terms yields
\[
\begin{aligned}
\lambda_{\mathrm{JRD}}\,
\mathrm{JRD}\!\left(
\hat P^{\theta^\star}_{\mathrm{train}},
\hat P^{\theta^\star}_{\mathrm{dep}}
\right)
\le\;&
\mathcal{L}_{\mathrm{train}}(\theta_{\mathrm{train}})
-
\mathcal{L}_{\mathrm{train}}(\theta^\star)
\\
&\quad+
\lambda_{\mathrm{JRD}}\,
\mathrm{JRD}\!\left(
\hat P^{\theta_{\mathrm{train}}}_{\mathrm{train}},
\hat P^{\theta_{\mathrm{train}}}_{\mathrm{dep}}
\right)
\end{aligned}
\]
which implies the stated bound.
\end{proof}

Although the practical density-ratio weights are clipped to improve numerical stability, clipping alone does not directly control distribution-level drift in the induced robustness values. We therefore refine the pretrained STL formula by minimizing the following unified shift-aware objective:
\begin{equation}
\label{eq:loss}
\begin{aligned}
\mathcal{L}(\theta)
= {} &
\frac{1}{|D_{\mathrm{train}}|}
\sum_{(X,Y)\in D_{\mathrm{train}}}
\ell\!\left(Y\,\rho_{\varphi_\theta}(X)\right) \\
&+
\frac{1}{|D_{\mathrm{dep}}|}
\sum_{(X,Y)\in D_{\mathrm{dep}}}
\omega(X)\,
\ell\!\left(Y\,\rho_{\varphi_\theta}(X)\right) \\
&+
\lambda_{\mathrm{JRD}}\,\mathrm{JRD}\!\left(\hat P^{\theta}_{\mathrm{train}},\hat P^{\theta}_{\mathrm{dep}}\right)
\end{aligned}
\end{equation}
where $\omega(X)$ denotes the density-ratio weight, and $\lambda_{\mathrm{JRD}}>0$ is a regularization parameter.

\vspace{-1.0em}
\subsection{Weighted Conformalized STL Inference}\label{sec:CP}

This section aims to provide correctness guarantees using CP for the learned STL formulas. The general nonconformity score in Section~\ref{sec:nonconformity score} is specialized to the robustness-based form for binary classification.
\begin{equation}
\label{eq:nonconformity score}
    A(X,Y) = S_{\theta}(X,Y) := -Y\,\rho_{\varphi_\theta}(X),
\end{equation}

where larger values of $S_\theta$ indicates stronger disagreement with label $Y$.
This choice is simple and compatible with our prior formulation~\cite{Li2025ConformalPF}. More importantly, it yields a unified decision statistic: the same signed robustness margin governs STL training, binary classification, and CP.


Under covariate shift, calibration and deployment-time test samples are generally not exchangeable. We therefore adopt weighted CP~\cite{tibshirani2019conformal}, where the calibration nonconformity scores are reweighted according to the density-ratio weights introduced in Section~\ref{sec:tlinet}. The nonconformativity scores on $D_{\mathrm{cal}}$ are defined as
\begin{equation}
s_i := S_{\theta}(X_i,Y_i), \qquad i=1,\dots,n_{\mathrm{cal}}
\end{equation}


The weighted empirical cumulative distribution of the calibration nonconformity scores is defined as
\begin{equation}
\hat{F}_w(t)
:=
\frac{\sum_{i=1}^{n_{\cal}}\omega(X) \, \mathbf{1}\{s_i \le t\}}
{\sum_{i=1}^{n_{\cal}}\omega(X)}
\label{eq:weighted_ecdf}
\end{equation}
The weighted conformal threshold is then taken as the $(1-\alpha)$-quantile of this weighted empirical distribution:
\begin{equation}
T_{\mathrm{wcp}}
:=
\inf \left\{
t \in \mathbb{R} : \hat{F}_w(t) \ge 1-\alpha
\right\}
\label{eq:weighted_threshold}
\end{equation}
Accordingly, the weighted CP set is defined by
\begin{equation}
C(X)
=
\{Y \in \{-1,+1\} : S_{\theta}(X,Y) \le T_{\mathrm{wcp}}\}.
\label{eq:weighted_prediction_set}
\end{equation}
Under Assumption~\ref{sec:assump1}, and using the ideal density-ratio weight $\omega^\star(X)$ defined in Section~\ref{sec:tlinet}, the prediction set $C(X)$ is a direct specialization of the weighted conformal construction under covariate shift in~\cite{tibshirani2019conformal} with coverage guarantees, i.e.,
\begin{equation}
\mathbb{P}_{(X,Y)\sim P_{\mathrm{dep}}}\!\left(Y\in C(X)\right)\ge 1-\alpha
\end{equation}

In practice, however, highly non-uniform weights may reduce the effective number of weighted samples and increase the variability of weighted empirical estimates. We can quantify this effect by adopting the effective sample size (ESS)~\cite{6279353}.


\[
\mathrm{ESS}(D)
=
\frac{\left(\sum_{X\in D}\omega(X)\right)^2}
{\sum_{X\in D}\omega(X)^2}
\]
with the normalized form $\mathrm{ESS}(D)/|D|\in(0,1]$. The following result formalizes the relation between weight concentration and the conditional variability of weighted empirical estimates.


\begin{proposition}
\label{lem:ess_variance}
Let $\{Z_i\}_{i=1}^{n}$ be i.i.d.\ from a distribution $Q$ and let $\{w_i\}_{i=1}^{n}$ be nonnegative weights with normalized weights $\bar w_i := w_i / \sum_{j=1}^{n} w_j$. For any measurable $g$ with $g(z)\in[-1,1]$ for all $z$, define
\[
\hat\mu_w(g) := \sum_{i=1}^{n} \bar w_i\, g(Z_i)
\]
Then, conditional on the weights $(w_1,\ldots,w_n)$,
\[
\mathrm{Var}\!\bigl[\hat\mu_w(g)\mid w_{1:n}\bigr]
\;\le\;
\sum_{i=1}^{n}\bar w_i^2
\;=\;
\frac{1}{\mathrm{ESS}},
\]
\end{proposition}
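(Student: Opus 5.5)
Conditioning on $w_{1:n}$ makes the normalized weights $\bar w_i$ fixed constants, so $\hat\mu_w(g)$ becomes a fixed linear combination of the random variables $g(Z_1),\dots,g(Z_n)$. The plan is to expand the variance of this linear combination, use independence to eliminate the cross terms, and bound each individual variance with the boundedness of $g$. The only modeling point to settle first is that the weights are treated as independent of the $Z_i$, or at least that, conditional on $w_{1:n}$, the $Z_i$ remain i.i.d.\ from $Q$. This is the reading under which the statement is meaningful; it holds, for example, when the weights are computed from data independent of $Z_{1:n}$. I will state this explicitly as the interpretation of ``conditional on the weights''.

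\textbf{Step 1 (independence removes cross terms).} Under that interpretation, the $g(Z_i)$ are conditionally independent, each with variance $\sigma_g^2 := \mathrm{Var}_{Q}[g(Z)]$. Hence
\[
\mathrm{Var}\bigl[\hat\mu_w(g)\mid w_{1:n}\bigr]=\sum_{i=1}^n \bar w_i^2\,\sigma_g^2 .
\]

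\textbf{Step 2 (bounded range bounds the variance).} Since $g(z)\in[-1,1]$, we have $\sigma_g^2\le \mathbb{E}_Q[g(Z)^2]\le 1$. (Popoviciu's inequality gives the same bound, $(b-a)^2/4=1$.) Substituting into Step 1 gives $\mathrm{Var}[\hat\mu_w(g)\mid w_{1:n}]\le \sum_i \bar w_i^2$.

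\textbf{Step 3 (identify the bound with $1/\mathrm{ESS}$).} Substituting $\bar w_i = w_i/\sum_j w_j$ gives
\[
\sum_i \bar w_i^2=\frac{\sum_i w_i^2}{\left(\sum_j w_j\right)^2}=\frac{1}{\mathrm{ESS}},
\]
which matches the paper's definition of ESS with $w_i=\omega(X_i)$. This requires $\sum_j w_j>0$, the implicit nondegeneracy assumption that already makes $\bar w_i$ well defined.

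The only real obstacle is the conditioning issue in the first paragraph. If the weights were functions of the $Z_i$ themselves, e.g.\ $w_i=\omega(Z_i)$, conditioning on $w_{1:n}$ would change the law of the $Z_i$ and could correlate them, and Step 1 would fail. I would therefore state the independence of weights and samples explicitly as the standing assumption. Everything else is a direct computation.
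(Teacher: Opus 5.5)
Your proof is correct and is the paper's argument: treat the normalized weights as constants given $w_{1:n}$, use independence to drop cross terms, bound each $\mathrm{Var}[g(Z_i)]\le 1$ via $|g|\le 1$, and rewrite $\sum_i \bar w_i^2$ as $1/\mathrm{ESS}$. Making the conditional-independence assumption explicit is a useful clarification the paper leaves implicit, but your claim that Step 1 fails when $w_i=\omega(Z_i)$ is overstated: if $\omega$ is a fixed function (as in the calibration setting, where $\omega$ is built from $D_{\mathrm{train}}$ and $D_{\mathrm{dep}}$), then the pairs $(Z_i,w_i)$ are independent across $i$, so the $Z_i$ stay conditionally independent given $w_{1:n}$, and the bound still holds with per-sample conditional variances, each at most $1$, in place of a common $\sigma_g^2$.
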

This bound shows that the conditional variability of a weighted empirical estimate is controlled by \(1/\mathrm{ESS}\). In particular, smaller ESS yields a larger upper bound on the conditional variance, indicating reduced stability when the weights are highly concentrated.

\begin{proof}
The quantity $\hat\mu_w(g)$ is a weighted average of independent variables $g(Z_1),\ldots,g(Z_n)$, each bounded in $[-1,1]$. Conditional on the weights, the variance is therefore bounded by the sum of squared normalized weights. Since $|g|\le 1$, $\mathrm{Var}[g(Z_i)]\le 1$. Independence yields
\[
\mathrm{Var}\!\left[\sum_{i=1}^{n} \bar w_i g(Z_i)\right]
=
\sum_{i=1}^{n}\bar w_i^2\, \mathrm{Var}[g(Z_i)]
\le
\sum_{i=1}^{n}\bar w_i^2
\]
Finally,
\[
\sum_{i=1}^{n}\bar w_i^2
=
\frac{\sum_{i=1}^{n} w_i^2}{(\sum_{i=1}^{n}w_i)^2}
=
\frac{1}{\mathrm{ESS}}
\]
\end{proof}
\vspace{-1.0em}

As a result, we can use ESS as a learning termination criterion of balancing the overfitting and maintaining the effective number of calibration samples for the distributional alignment during shift iterative refinement procedure. 
In particular, let $\mathrm{ESS}^{(t)}$ denote the effective sample size at iteration $t$. learning prcoess in Section~\ref{sec:tlinet} is terminated when
\[
\left|
\frac{\mathrm{ESS}^{(t)}}{n}
-
\frac{\mathrm{ESS}^{(t-1)}}{n}
\right|
\le \varepsilon
\]
where $n$ is the number of calibration samples and $\varepsilon > 0$ is a tolerance parameter. This criterion indicates that further updates no longer substantially change the effective distribution alignments.

\section{Experimental Results}
We evaluate the proposed conformalized STL inference framework on three trajectory datasets with distinct distributional characteristics under covariate shift. All experiments were conducted on a Linux workstation equipped with an Intel Core i9-13900KF CPU (24 cores, 32 threads). The objectives of the experiments are threefold:
\begin{itemize}[leftmargin=*, noitemsep, topsep=0pt]
\item To assess the classification performance of the refined STL classifier compared to nominal TLINet training;
\item To evaluate the behavior of CP under distribution shift; 
\item To examine the efficiency-coverage tradeoff between standard CP and weighted CP.
\end{itemize}

\subsection{Evaluation Metrics}
For a target miscoverage level $\alpha$, the conformal predictor aims to achieve a coverage level of $1-\alpha$. In experiments we report the empirical coverage, defined as
\begin{equation}
\text{Coverage} =
\frac{1}{n}\sum_{i=1}^{n}
\mathbf{1}\{Y_i \in C(X_i)\}
\end{equation}
which estimates the probability that the prediction set contains the true label under the test distribution.

To quantify the efficiency of the prediction sets, we report the average prediction set size, referred to as inefficiency:
\begin{equation}
\text{Inefficiency} =
\frac{1}{n}\sum_{i=1}^{n} |C(X_i)|
\end{equation}
This metric measures the average size of prediction sets, where smaller values correspond to more compact predictions. In the binary classification setting considered here, the prediction set $C(X)$ is defined in Section~\ref{sec:nonconformity score} has cardinality in $\{0,1,2\}$. A singleton set corresponds to a confident prediction, a two-label set indicates ambiguity, and an empty set indicates that neither label satisfies the acceptance condition.

\subsection{Datasets}
\textbf{Dataset 1:} \emph{Naval Surveillance:}
The Naval Surveillance dataset consists of labeled time-series trajectories derived from a marine propulsion system benchmark. Each trajectory is a multivariate signal $X \in \mathbb{R}^{d \times T}$, and the task is binary classification of normal versus anomalous operational behavior. 

\textbf{Dataset 2:} \emph{Place a block into a basket without violating the safety constraint.}
This task requires the robot to transport a target block into a designated basket region while keeping the trajectory within the basket boundary marked by tape.


\begin{figure}[t]
\centering
\begin{subfigure}[b]{0.48\linewidth}
    \centering
    \includegraphics[width=\linewidth]{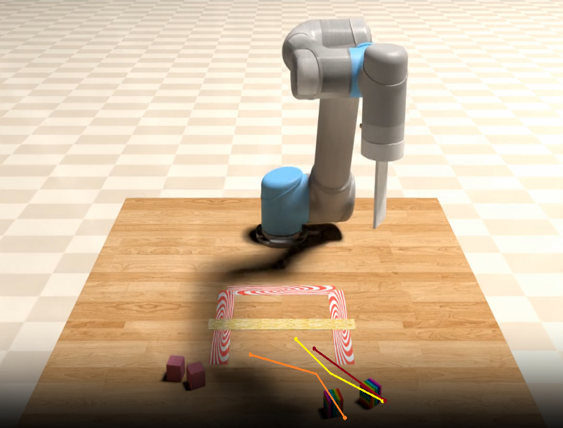}
    \caption{VIMA simulated environment}
\end{subfigure}
\hfill
\begin{subfigure}[b]{0.48\linewidth}
    \centering
    \includegraphics[width=\linewidth]{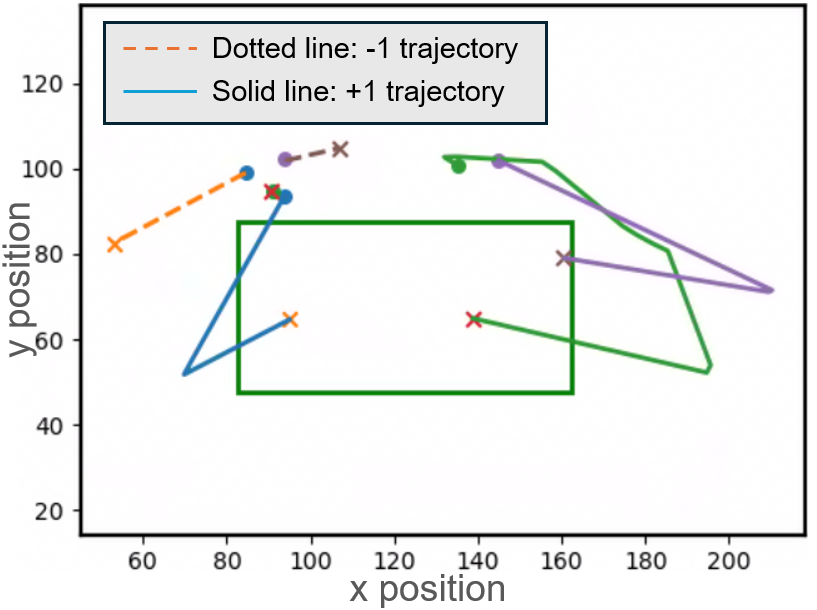}
    \caption{Vima Dataset}
\end{subfigure}
\hfill
\begin{subfigure}[b]{0.48\linewidth}
    \centering
    \includegraphics[width=\linewidth]{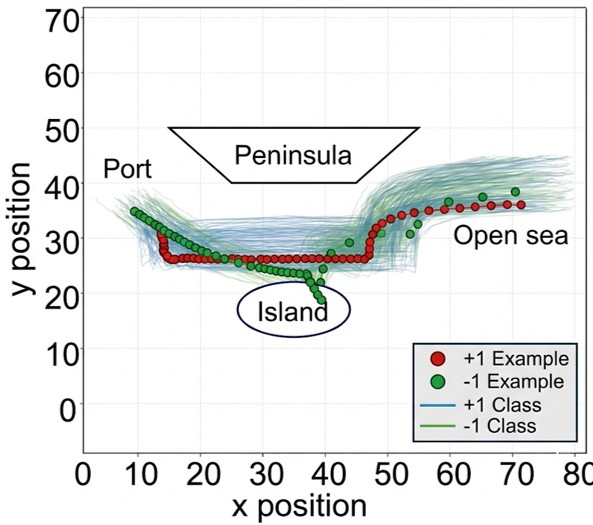}
    \caption{Naval Dataset}
\end{subfigure}
\hfill
\begin{subfigure}[b]{0.48\linewidth}
    \centering
    \includegraphics[width=\linewidth]{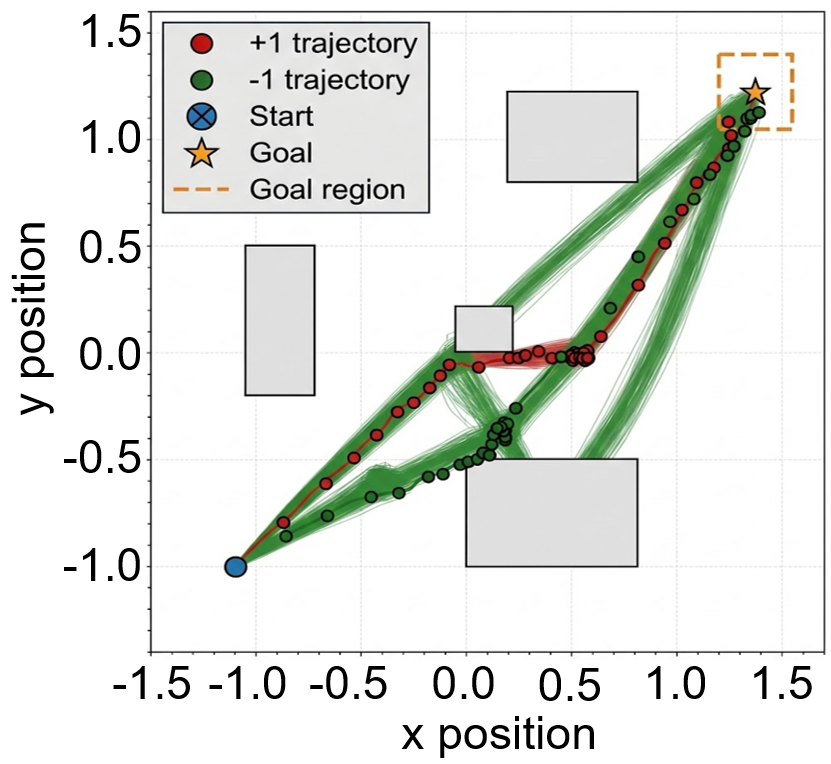}
    \caption{Motion Planning Dataset}
\end{subfigure}
\caption{Schematic Diagram of the Dataset }
\label{fig:trajectory}
\vspace{-1.5em}
\end{figure}

\begin{table*}[t]
\centering
\caption{Classification results of our method and baseline methods on trajectory datasets.}
\label{tab:results}
\small
\begin{tabular}{lccc p{9cm}}
\toprule
Method & MCR for $P_{\mathrm{train}}$ & MCR for $P_{\mathrm{dep}}$ & Time(s) & STL formula \\
\midrule

\multicolumn{5}{c}{\textbf{Naval Dataset}} \\
\midrule
\textbf{Ours} & \textbf{1.25} & \textbf{0.07} & \textbf{16}&
$\boldsymbol{\Diamond_{[25,26]} (x \leq 38.6)\ \wedge\ \Box_{[11,12]} (y \geq 24.1)}$\\

TLINet & 1.25 & 0.05 & 38 &
$\Diamond_{[55,60]}(x<25.89)\wedge\Box_{[0,16]}(y>23.77)$ \\

LSTM/RNN & 0.01& 0.025 & 19 & N/A \\

BCDT & 0.0100 & N/R & 1996 &
$\Diamond_{[28,53]}(x<30.85)\wedge\Box_{[2,26]}(y>21.31)\wedge(x>11.10)$ \\

DT & 0.0195 & N/R & 140 & $\neg(\Diamond_{[38,53]}(x>20.1)\wedge\Diamond_{[12,37]}(x>43.2))
\vee(\Diamond_{[38,53]}(x>20.1)\wedge\neg\Diamond_{[20,59]}(y>32.2))
\vee(\Diamond_{[38,53]}(x>20.1)\wedge\Diamond_{[20,59]}(y>32.2)\wedge\Box_{[14,60]}(y>30.1))$ \\

DAG & 0.0885 & N/R & 996 &
$\Diamond_{[0,33]}(\Box_{[18,23]}(y>19.88)\wedge\Box_{[9,30]}(x<34.08))$ \\

\midrule
\multicolumn{5}{c}{\textbf{VIMA Dataset}} \\
\midrule
\textbf{Ours} & \textbf{8.55} & \textbf{0.015} & \textbf{15} & 
$\boldsymbol{\Box_{[17,19]}((88.62<x<152.21)\wedge(45.47<y<89.73))}$ \\

TLINet & 8.55 & 0.0125 & 45 &
$\Box_{[17,19]}((89.86<x<147.14)\wedge(59.64<y<89.97))$ \\

LSTM/RNN & 8.30 & 0.08 & 23 & N/A \\

\bottomrule
\end{tabular}
\end{table*}

\vspace{-0.8em}
\subsection{Baseline Comparison}
We compare our framework with several baseline classifiers on the considered trajectory datasets: TLINet, a differentiable STL inference model; LSTM/RNN, a recurrent neural-network baseline for sequence classification~\cite{hochreiter1997long}; BCDT, a boosted classification-tree method~\cite{friedman2001greedy,aasi2022classification}; DT, a standard decision tree~\cite{breiman2017classification}; and DAG, a directed-acyclic-graph-based temporal classifier~\cite{platt1999large,kong2016temporal}. These methods serve as reference baselines for classification performance. Neural baselines such as LSTM can be further adapted with additional data, whereas the remaining non-neural baselines do not naturally admit the same gradient-based refinement mechanism as our framework and typically require retraining instead.

Table~\ref{tab:results} report the MCR, training time, and the learned STL formulas for all compared methods. Across the evaluated datasets, models trained only on source-side data degrade under the deployment distribution, which is consistent with covariate shift. Our framework reduces this degradation by refining the core trained STL formula with deployment-side data and remains close to the oracle TLINet trained directly on $P_{\mathrm{dep}}$. Compared with the neural and tree-based baselines, it achieves competitive or better classification performance while retaining an explicit STL formula. The reported training times show that the additional refinement stage remains computationally practical.

\vspace{-0.8em}
\subsection{Ablation Study}
\paragraph{STL Classification}
Table~\ref{tab:results} isolates the effect of the shift-aware refinement stage. Training only on $D_{\mathrm{train}}$ leads to degraded performance under $P_{\mathrm{dep}}$, while incorporating $P_{\mathrm{dep}}$ substantially reduces the MCR across datasets.


\paragraph{Weighted Conformal Prediction under Covariate Shift}

\begin{figure}[t]
    \centering
    \includegraphics[width=\columnwidth]{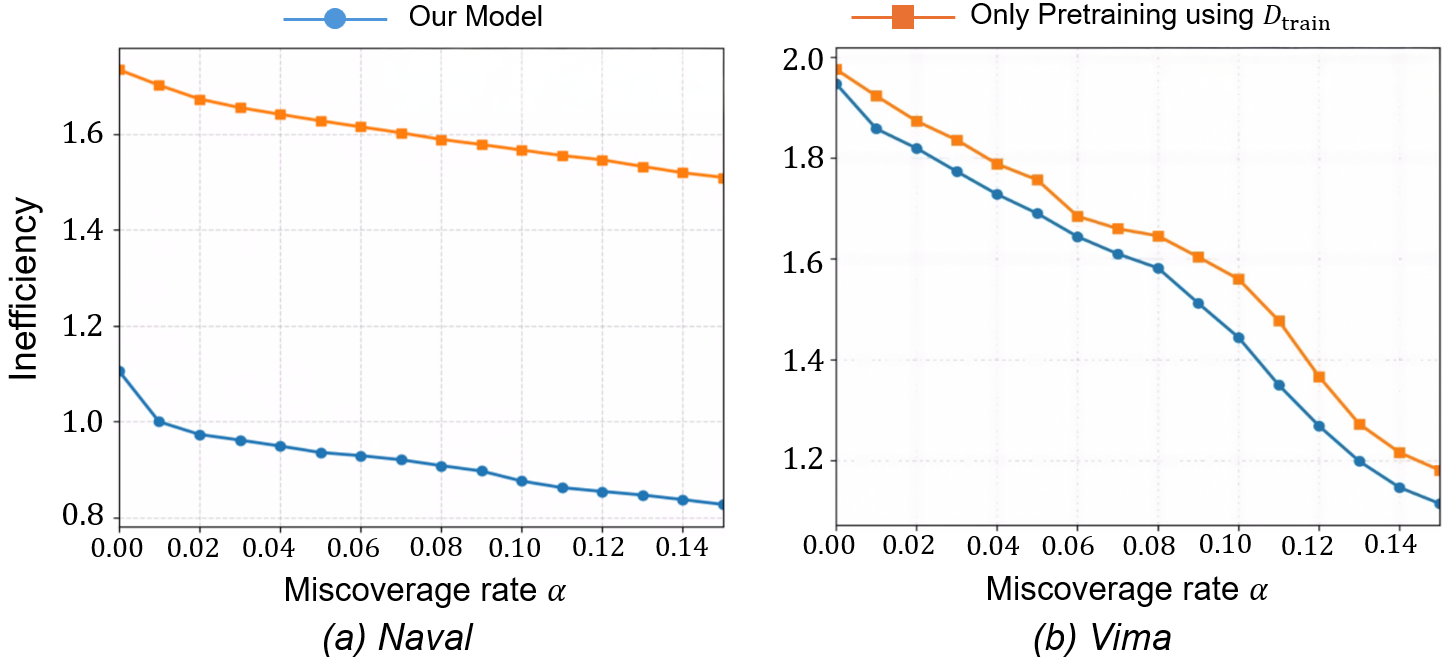}
    \caption{Weighted CP with Covariate Shift }
    \label{fig:stl_comparison}
    \vspace{-1.5em}
\end{figure}

Figure~\ref{fig:stl_comparison} compares weighted CP applied to the refined STL classifier with the same procedure applied to a model trained without shift-aware retraining. In both cases, density-ratio weighting is used during calibration.

On the Naval dataset, the refined model yields substantially smaller prediction set sizes across all miscoverage levels. The gap between the two curves remains consistent, indicating that shift aware retraining improves the alignment between the learned robustness scores and the deployment distribution. As a result, fewer candidate labels are required to maintain the same coverage level. A similar trend is observed on VIMA. Although the difference is less pronounced than in Naval, the refined model consistently achieves lower inefficiency across the range of target miscoverage levels. This suggests that incorporating $D_{\mathrm{dep}}$ during training reduces the impact of marginal distribution mismatch on the calibration score distribution.
Overall, these results indicate that shift-aware refinement enhances the effectiveness of weighted CP by improving the distributional alignment of robustness scores under deployment conditions.

\paragraph{Effect of Shift-aware Refinement on Conformal Prediction}
\begin{figure}[t]
\centering

\begin{subfigure}[t]{\columnwidth}
    \centering
    \includegraphics[width=\columnwidth]{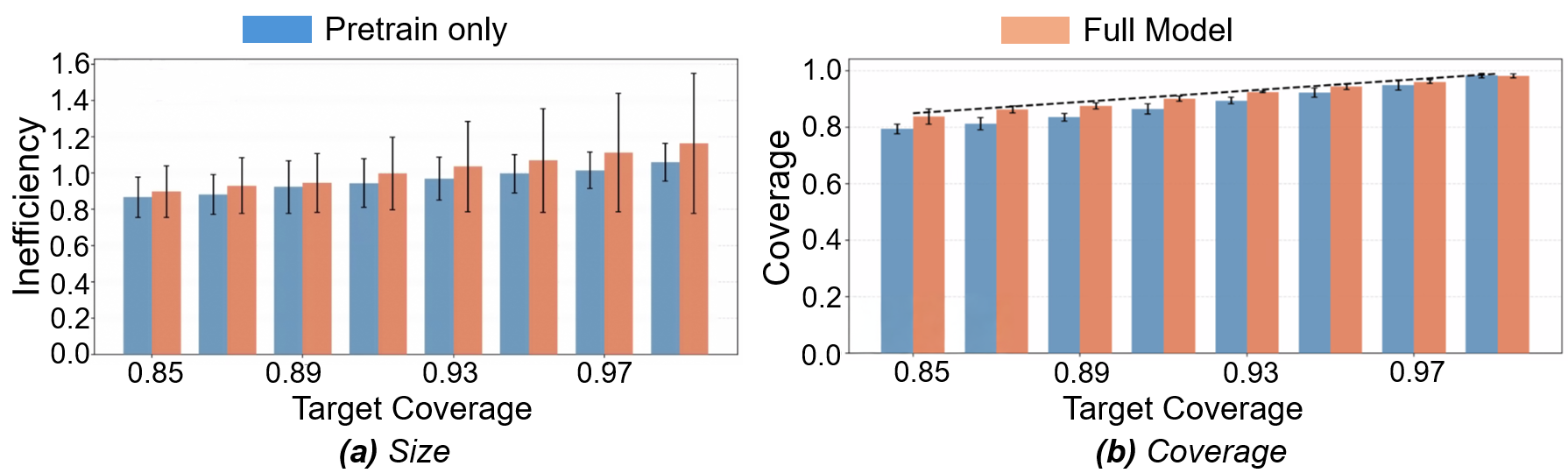}
    \caption{Naval Dataset}
\end{subfigure}

\vspace{0.5em}

\begin{subfigure}[t]{\columnwidth}
    \centering
    \includegraphics[width=\columnwidth]{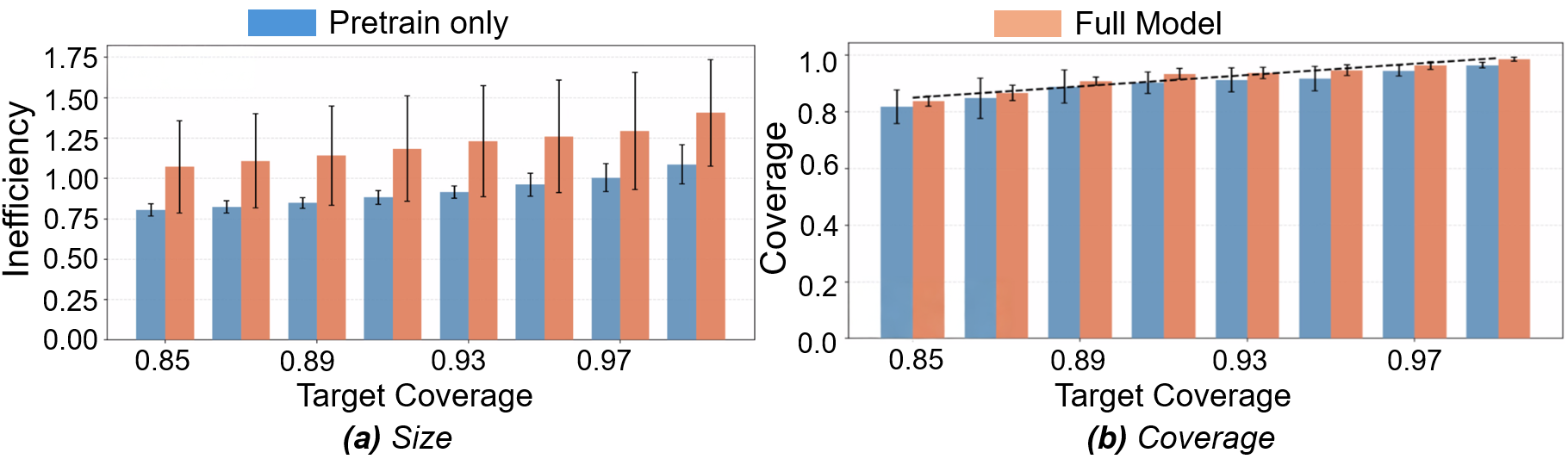}
    \caption{Motion Planning Dataset}
\end{subfigure}

\caption{Comparison across datasets with weighted CP.}
\label{fig:wcp_datasets}
\vspace{-1.0em}
\end{figure}

Figures~\ref{fig:wcp_datasets} further examine how shift-aware refinement affects CP. On both datasets, the refined model yields prediction sets whose average size is closer to one at comparable target coverage levels, indicating better alignment between the learned robustness scores and the deployment distribution. At the same time, empirical coverage remains closer to the target level across settings.


\paragraph{Weighted vs. Standard Conformal Prediction}

\begin{figure}[t]
    \centering
    \includegraphics[width=\columnwidth]{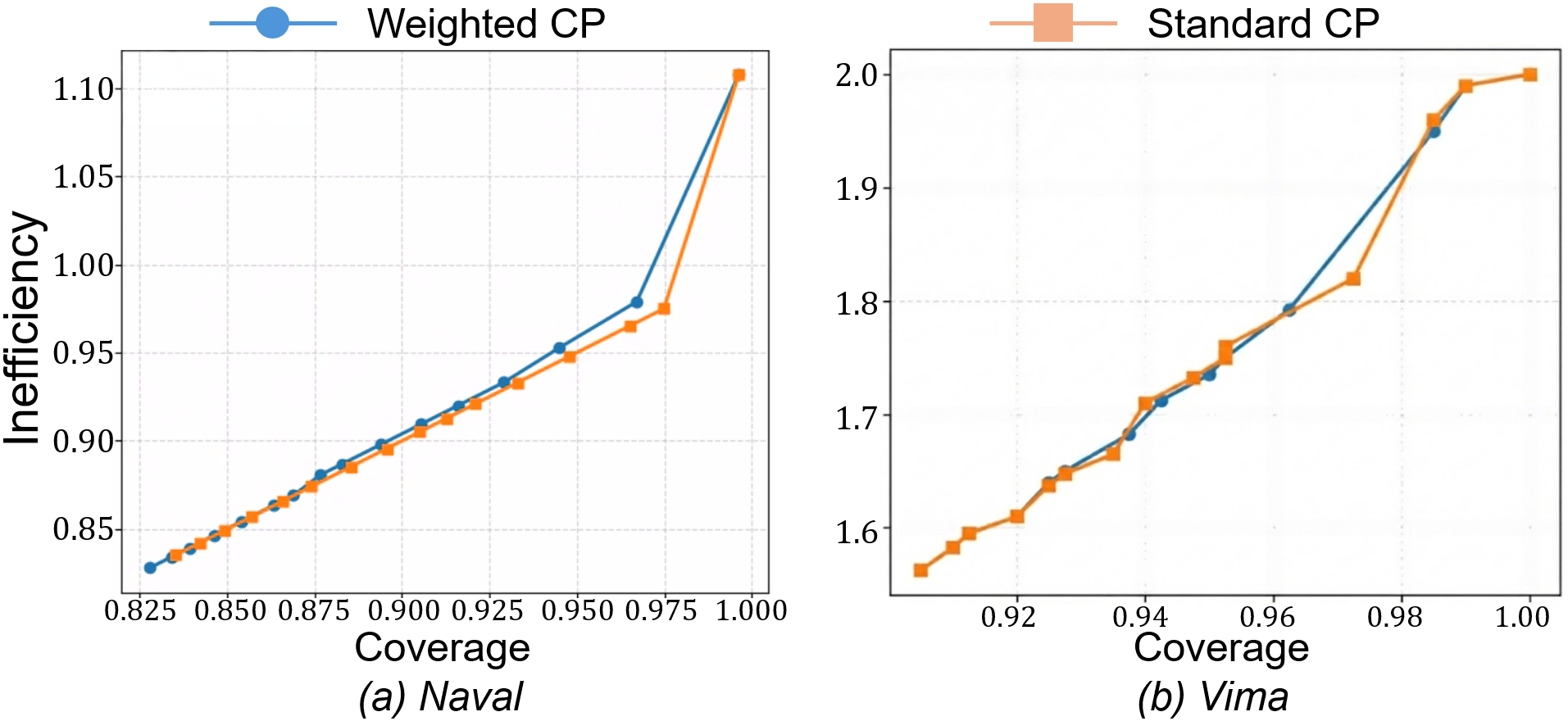}
    \caption{Comparison between weighted CP and Standard CP using our method in different dataset.}
    \label{fig:CP&WCP}
    \vspace{-1.5em}
\end{figure}

Figure~\ref{fig:CP&WCP} compares weighted CP and standard CP in terms of empirical coverage–inefficiency trade-offs on the Naval and VIMA datasets. Across both datasets, the two methods yield closely matched curves, indicating that density-ratio reweighting has only a minor effect on the conformal quantile in the current setting. This suggests that the calibration and test score distributions are already reasonably aligned after refinement.

\vspace{-0.5em}
\section{Conclusion}
We studied STL-based trajectory classification under covariate shift and proposed a shift-aware STL inference framework that refines a learned STL formula using deployment-time data. The method combines TLINet pretraining with a refinement stage that improves the alignment between the learned robustness scores and the deployment distribution. In addition, CP is incorporated to quantify prediction uncertainty while maintaining statistical coverage guarantees. Experimental results on multiple trajectory datasets show that the proposed framework improves classification performance under distribution shift while preserving the interpretability of STL-based decision rules. Future work includes extending the approach to multi-class STL inference and investigating more advanced shift-adaptation strategies.



\bibliography{example_paper}
\bibliographystyle{IEEEtran}




\end{document}